\DeclareMathOperator*{\argmax}{arg\,max}
\DeclareMathOperator*{\argmin}{arg\,min}
\DeclareMathOperator*{\diag}{diag}
\renewcommand{\Re}{\mathbb{R}}
\newcommand{\A}{\mathcal{A}}
\begin{document}
\mainmatter              % start of a contribution
\title{Multi-agent Task-Driven Exploration \\ via Intelligent Map Compression and Sharing}
\titlerunning{Task-Driven Exploration via Intelligent Map Compression and Sharing}  % abbreviated title (for running head)
%                                     also used for the TOC unless
%                                     \toctitle is used
%
\author{Evangelos Psomiadis\inst{1} \and Dipankar Maity\inst{2} \and 
Panagiotis Tsiotras\inst{1}} 
\authorrunning{Evangelos Psomiadis \and Dipankar Maity \and 
Panagiotis Tsiotras} % abbreviated author list (for running head)
\institute{D. Guggenheim School of Aerospace Engineering, Georgia Institute of Technology, Atlanta, GA, 30332-0150, USA,\\
\email{\{epsomiadis3, tsiotras\}@gatech.edu}
\and
Department of Electrical and Computer Engineering, University of North Carolina at Charlotte, NC, 28223-0001, USA,\\
\email{dmaity@charlotte.edu}
\thanks{The work was supported by the ARL grant DCIST CRA W911NF-17-2-0181.}
}

\maketitle              % typeset the title of the contribution

\begin{abstract}
This paper investigates the task-driven exploration of unknown environments with mobile sensors communicating compressed measurements.
The sensors explore the area and transmit their compressed data to another robot, assisting it to reach its goal location.
We propose a novel communication framework and a tractable multi-agent exploration algorithm to select the  sensors' actions. 
The algorithm uses a task-driven measure of uncertainty, resulting from map compression, as a reward function.
We validate the efficacy of our algorithm through numerical simulations conducted on a realistic map and compare it with alternative approaches.
The results indicate that the proposed algorithm effectively decreases the time required for the robot to reach its target without causing excessive load on the communication network.
\end{abstract}

\section{Introduction}
In recent years, there has been a notable rise in the use of autonomous robot teams undertaking intricate tasks \cite{Thangavelautham2020 ,salzman2020, Queralta2020}. 
During these operations, effective communication between the robots becomes crucial in order to optimize the overall performance of the team.
To avoid unnecessary delays, it is essential for the robots to be cognizant of the network's bandwidth limitations and incorporate them in their control loop \cite{Gielis2022}, by compressing their measurements accordingly.

Resource-aware decision-making is indispensable for achieving assured autonomy in challenging conditions  involving multi-agent interactions under severe communication and computational constraints \cite{xu2022resource, larsson2020q, hireche2018bfm, mamduhi2023regret, maity2023regret, mamduhi2021delay }. 
In particular, the coupled compression and control problem has gathered significant attention within both the classical control \cite{delchamps1990stabilizing, brockett2000quantized,  nair2004stabilizability, kostina2019rate} and robotics \cite{psomiadis2023, marcotte2020, unhelkar2016, Damigos2024} communities.
Although the problem of designing the optimal quantizer, even for a single agent with a single task, has been proven to be intractable \cite{fu2012lack, yüksel2019note}, alternative approaches have been proposed that tackle the problem by selecting the optimal quantizer from a given set of quantizers \cite{maity2021optimal, maity_quant}. 
In our previous work \cite{psomiadis2023}, we adopted a similar approach for mobile robot path-planning using compressed measurements of mobile sensors having predefined paths.
In this work, we develop a  compression framework similar to \cite{psomiadis2023} for a set of mobile sensors to intelligently compress their data in a task-driven manner. 
We also consider an exploration strategy for these mobile sensors to uncover unknown regions in a task-driven manner.

Distributed sensing or sensor placement has been widely studied in the estimation and controls communities \cite{krause2008nearoptimal}.
Different performance metrics have been used to drive optimal sensor placement such as entropy, mutual information, Kullback–Leibler divergence \cite{mu2015two}, energy-constraints \cite{paraskevas2016distributed}, and value-of-information \cite{maity2015dynamic}. 
Although most of the literature includes stationary sensor placement, there have been notable extensions to mobile sensors as well \cite{demetriou2009, fang2021}.

Inspired by distributed sensing, many researchers have studied the problem of robot exploration and motion-planning to minimize the uncertainty of mapping and perception. 
Most information-theoretic exploration approaches utilize Shannon's entropy as the information metric \cite{bourgault2002information, asgharivaskasi2023semantic}. 
In \cite{charrow2015} the authors proposed the Cauchy-Schwarz Quadratic Mutual Information as an alternative to Shannon's entropy for faster computation.
Unlike the existing literature, our focus is on minimizing the uncertainty of compression in the environment.

\textit{Contributions:}
Our contribution consists of three main elements. 
Firstly, we expand the conventional grid-world compression scheme by assigning not only a single value per compressed cell but a pair of values, representing both the mean and variance of the cell. 
Secondly, this extension enables us to approximate the compression uncertainty and improve the estimation process, in comparison to previous frameworks relying solely on the mean.
Finally, we propose a tractable multi-agent task-driven exploration algorithm to assist another robot in reaching a target by reducing the compression uncertainty of the environment during the exploration phase.

%%%%%%%%%%%%%%%%%%%%%%%%%%%%%%%%%%%%%%%%%%%%%%%%%%%%%%%%%%%%%%%%%%%%%%%%%%%%%%%%
\section{Preliminaries: Grid World and Abstractions} \label{sec:preliminaries}

%%%%%%%%%%%%%%%%%%%%%%%%%%%
% Example
\begin{figure}[t]
     \centering
     \begin{subfigure}[b]{0.2\linewidth}
         \centering
         \includegraphics[width=\textwidth]{./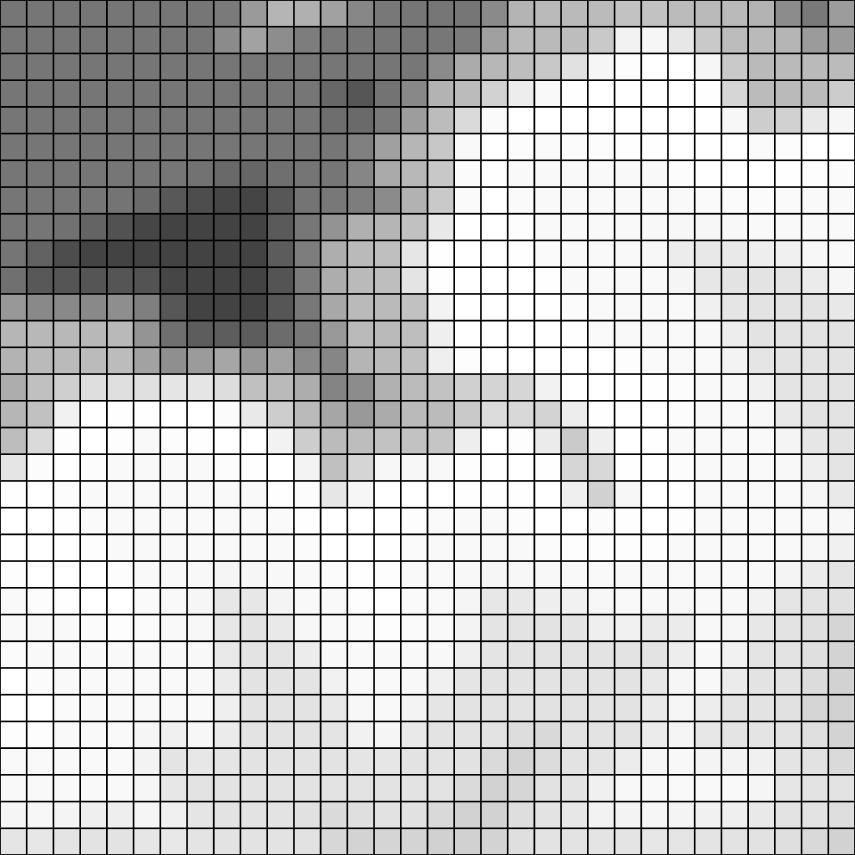}
        \put(-70,-9) {$\longleftarrow \hspace{0.51 cm} w \hspace{.51 cm} \longrightarrow $}
        \put(-80, 0) {\rotatebox{90}{$\longleftarrow \hspace{.51 cm} h \hspace{.51 cm} \longrightarrow $}}
         \caption{}
        \label{fig:occupancygrid1}
     \end{subfigure} \hspace{ 0.2 cm}
     \begin{subfigure}[b]{0.2\linewidth}
         \centering
         \includegraphics[width=\textwidth]{./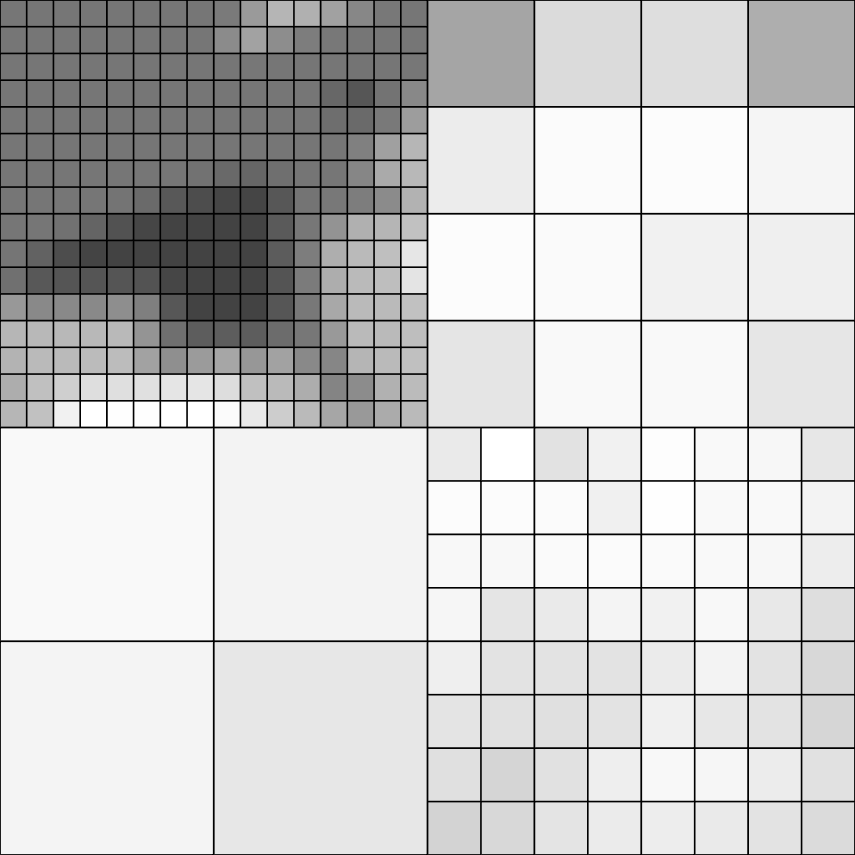}
        \put(-63,-9) {\color{white}{$\longleftarrow \hspace{.34 cm} w \hspace{.34 cm} \longrightarrow $}}
         \caption{}
        \label{fig:occupancygrid2}
     \end{subfigure} \hspace{ 0.2 cm}
     \begin{subfigure}[b]{0.2\linewidth}
         \centering
         \includegraphics[width=\textwidth]{./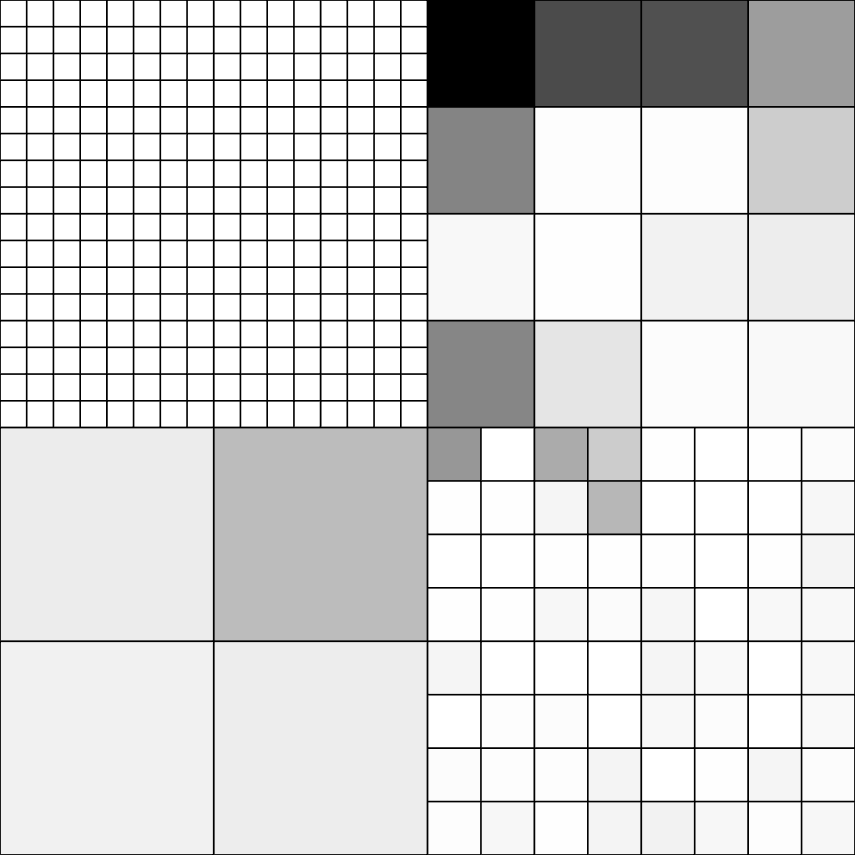}
        \put(-63,-9) {\color{white}{$\longleftarrow \hspace{.34 cm} w \hspace{.34 cm} \longrightarrow $}}
         \caption{}
        \label{fig:occupancygrid3}
     \end{subfigure}
    \caption{(a) Finest resolution grid; (b) Compressed grid; (c) Variance of the compressed grid, with darker cells indicating higher variance.}
    \label{fig:occupancygrid}
\end{figure}
%%%%%%%%%%%%%%%%%%%%%%%%%%%

We adopt the premise that the environment is represented by a 2D static grid denoted by $\mathcal{M} \subset \Re^2 $ (e.g., occupancy grid as proposed in \cite{elfes1987, moravec1988}, elevation map, or a more general discretized cost map), with $N$ being the total number of cells in $\mathcal{M}$ and $\textbf{p}_{j}$ being the position (e.g., the center of the cell) of the $j$-th cell in $\mathcal{M}$. 
We define the index set of the finest resolution cells within $\mathcal{M}$ as $M = \{j: \textbf{p}_{j} \in \mathcal{M}\}$.
A robot is equipped with onboard sensors capable of observing a 2D grid of size $w \times h \leq N$ as shown in Fig.~\ref{fig:occupancygrid}(\subref{fig:occupancygrid1}). 
The vector $x \in [0,1]^{N}$ contains the values of the grid cells, where the value of the $j$-th component of $x$, denoted by $x_j$, lies in the range $[0,1]$ for all $j = 1, \ldots, N$.
Here $x_j$ represents the traversability of the cell, where $x_j = 1$ indicates a non-traversable cell and $x_j = 0$ denotes an obstacle-free cell.
All robots have the same understanding of cell traversability.

%%%%%%%%%%%%%%%%%%%%%%%%%%%
% Abstractions
\begin{figure}[tb]
    \centering    
    \begin{subfigure}{0.49\linewidth}
        \includegraphics[trim=0 160 0 2, clip, width=\linewidth]{./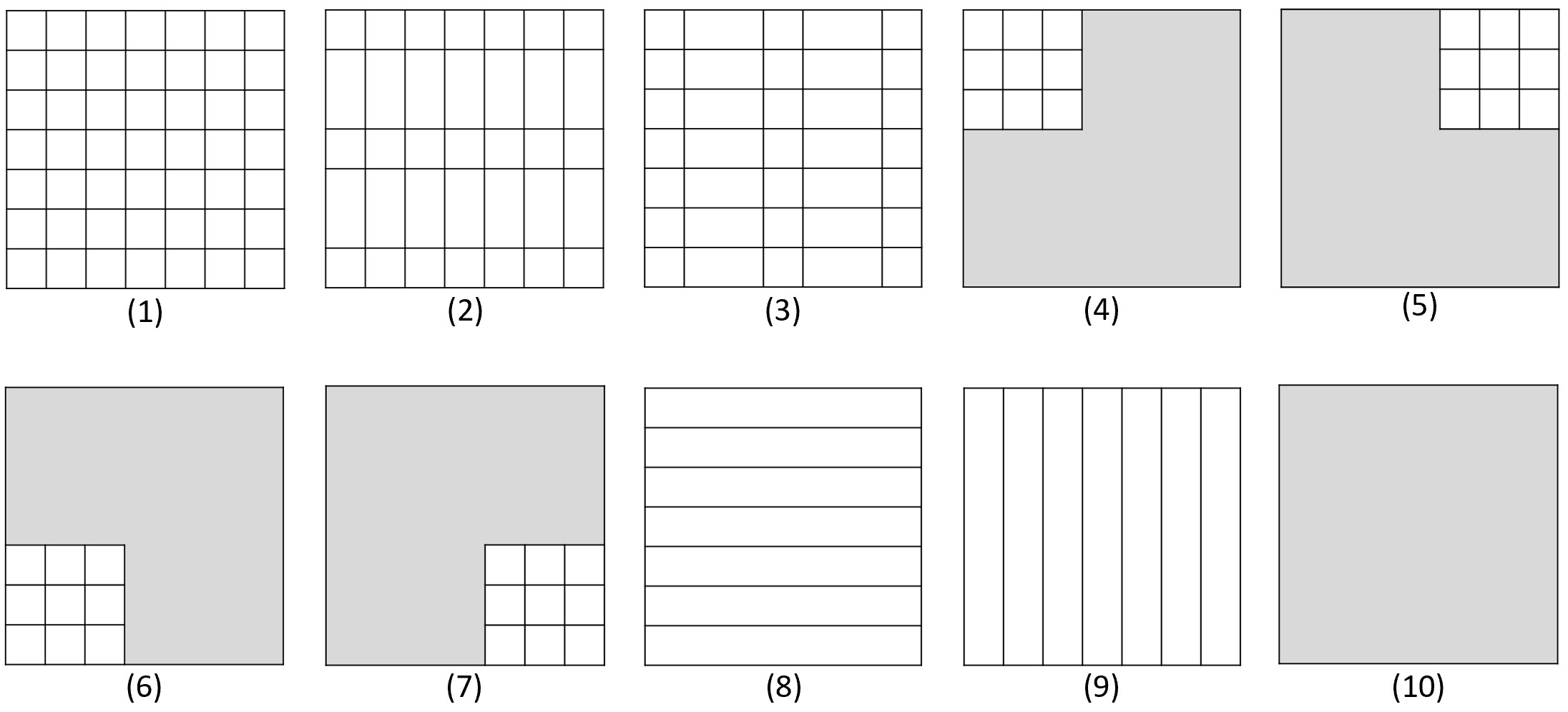}
    \end{subfigure}%
    \hspace{0.005 cm} 
    \begin{subfigure}{0.485\linewidth}
        \includegraphics[trim=0 0 0 160, clip, width=\linewidth]{./figs/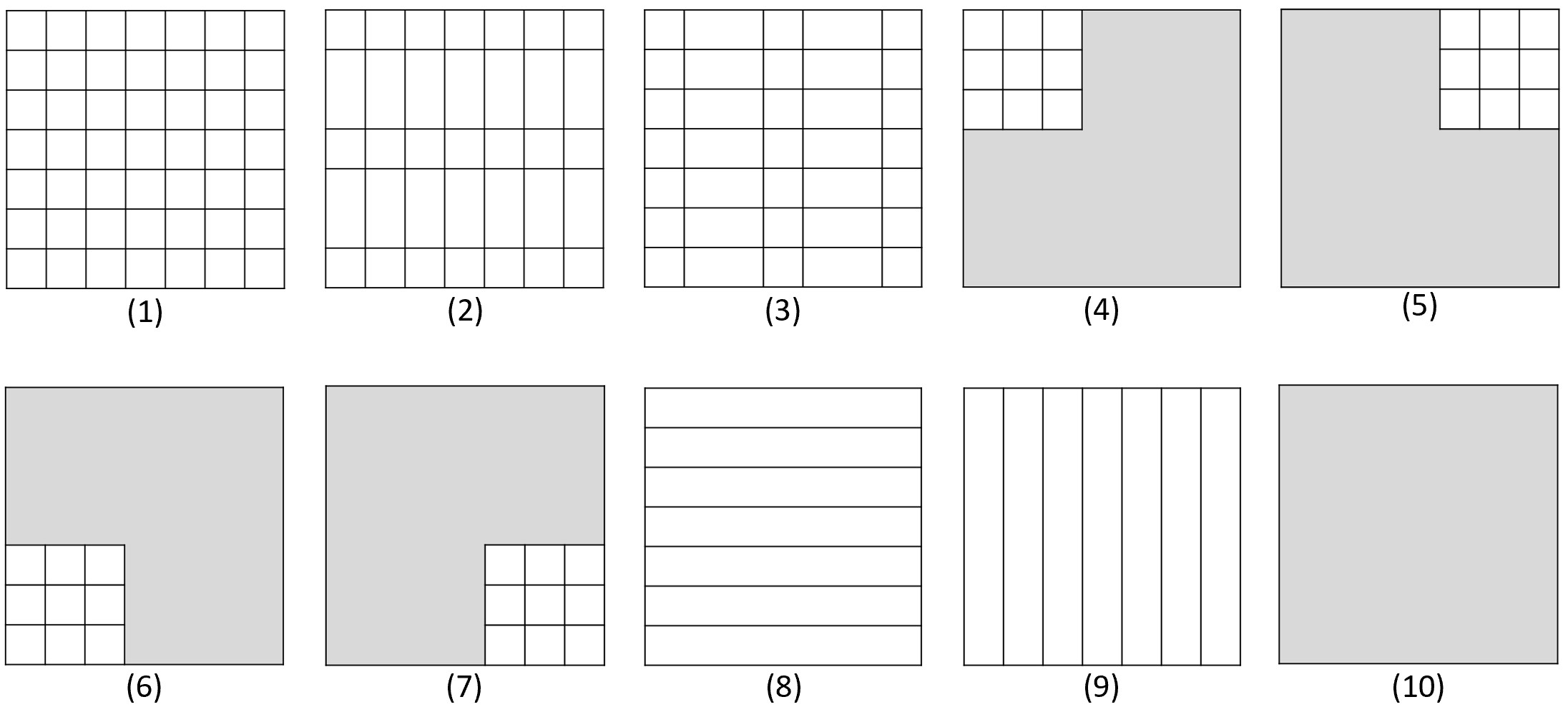}
    \end{subfigure}
    \caption{Example of a set of abstractions. The grey area denotes no information.}
    \label{fig:abstr_set}
\end{figure}
%%%%%%%%%%%%%%%%%%%%%%%%%%%

We define as an \textit{abstraction} a multi-resolution compressed representation of the grid $\mathcal M$, as in Fig.~\ref{fig:occupancygrid}(\subref{fig:occupancygrid2}). 
Each abstraction is characterized by its \textit{compression template}, indicating how the grid cells are compressed. 
The structure of an abstraction is based on the robot's sensed map and is manually crafted beforehand.
Fig.~\ref{fig:abstr_set} shows examples of a few possible abstractions for a $7 \times 7$ grid.
Various techniques have been used to determine the value of a compressed cell \cite{cowlagi2012multiresolution, kraetzschmar2004probabilistic, larsson2020q}.
In our previous work \cite{psomiadis2023}, we considered the value of the compressed cell to be the average of the values of the underlying cells. 
This way, each abstraction can be thought of as a linear mapping ${\A}^{\theta}: [0,1]^{N} \to [0,1]^{k^{\theta}}$, where $k^{\theta}\le wh \le N$ is the number of cells in the compressed representation utilizing abstraction \({\theta}\) for a given robot's local sensed map. 
Specifically, for a high resolution map $x \in [0,1]^{N}$, the values of the cells using abstraction \({\theta}\) are given by $o = \A^{\theta} x$, where $\A^{\theta} \in \Re^{k^{\theta} \times N}$. 
For example, the dimensions in Fig. \ref{fig:abstr_set} are $w = 7$, $h = 7$, and the number of compressed cells for abstractions $\theta = 8$ and $9$  is $k^{\theta} = 7$.
We will denote the $j$-th finest resolution cell as $j \in M$ and the $i$-th compressed cell as $i \in I^\theta$, where $I^\theta$ is the index set of compressed cells obtained from $\mathcal{M}$ through abstraction $\theta$ having cardinality equal to $k^\theta$.
With a slight abuse of notation, we will use the notation $j \in i$ to denote that the finest resolution cell in $\mathcal{M}$ with index $j$ belongs to the compressed cell in $I^\theta$ having index $i$.

In this study, each $i$-th cell in $I^\theta$ is assigned a pair of values, the mean and the variance.
The communication of the variance has a two-fold impact. 
Firstly, it serves as an uncertainty indicator due to the compression. 
Secondly, it provides better estimates along with the mean. 
We define the variance $V_i$ of the $i$-th cell of abstraction $\theta$ as the normalized squared deviation from the mean, that is,
\begin{equation}\label{eq:variance}
    V_i = 
        \sum\limits_{j \in M} \A^\theta _{ij}\left( x_j - o_i \right)^2, \quad i \in I^\theta.
\end{equation}
Note that $\A^\theta _{ij} = {1}{/N^\theta_i}$ if $j \in i$, and $\A^\theta _{ij} = 0$ otherwise, where $N^\theta_i$ is the number of non-zero elements in the $i$-th row of $\A^{\theta}$. 
Fig.~\ref{fig:occupancygrid}(\subref{fig:occupancygrid3}) shows the variance of the compressed grid presented in Fig.~\ref{fig:occupancygrid}(\subref{fig:occupancygrid2}). 
Darker cells indicate higher variance.

\subsection{Communication of Abstracted Environments} \label{sec:comm_of_abs}
Consider a pair of mobile robots, named as Actor and Sensor, that communicate with each other.
At each timestep $t$, the Sensor chooses an abstraction $\theta_t \in \Theta$ to compress its local map and transmits it to the Actor, where $\Theta = \{1, \ldots, K\}$ is the set of available abstractions (i.e., codebook) the two robots have agreed upon (e.g., see Fig. \ref{fig:abstr_set} for a collection of templates). 
Specifically, the Sensor transmits the abstraction triplet $(o_t, V_t, {\theta}_t)$, along with its current position \(\textbf{p}_{t}^s\),  where $V_t$ is the compression variance given in \eqref{eq:variance}, and $o_t = \A^{\theta_t} x$ with $x$ containing the measurements of the Sensor. 
Having received the transmitted $\theta_t$ and position \(\textbf{p}_{t}^s\), the Actor computes $\A^{\theta_t}$ and attempts to reconstruct $x$ from $o_t$ and $V_t$.

We denote by $n_m$ the number of bits required to transmit the compressed value $o_{i,t}$ and variance $V_{i,t}$, and $n_a$ the number of bits required to transmit an abstraction index ${\theta}$. 
Hence, the total number of bits $n^{\theta}$ required to transmit the abstraction triplet employing abstraction \({\theta}\) is:
\begin{equation}\label{eq:bit_abstr}
n^{\theta} = k^{\theta} n_m + n_a.
\end{equation}

%%%%%%%%%%%%%%%%%%%%%%%%%%%%%%%%%%%%%%%%%%%%%%%%%%%%%%%%%%%%%%%%%%%%%%%%%%%%%%%%
\section{Problem Formulation}\label{sec:problem}

Consider a team of mobile robots with different objectives navigating through an unfamiliar environment  \(\mathcal{M} \subset \Re^2\) cluttered with static obstacles.
Each robot has its own sensing equipment capable of observing a portion of the environment (local map) as it moves.
The team consists of the Actor robot and multiple mobile Sensors.
The Actor's goal is to reach a specified target location in minimum time.
Meanwhile, the Sensors aim to support the Actor in achieving its objective by communicating informative abstractions of their local maps to the Actor at each timestep.
The Sensors can be regarded, for example, as surveillance drones capable of moving over ground obstacles, while the Actor is assumed to be a ground robot, that has to circumnavigate these obstacles.

Let \(\textbf{p}_{t}^A\), \(\textbf{p}_{t}^s \in \mathcal{M}\)  be the Actor and each Sensor's position, respectively, at timestep \(t\), where \(s\) denotes the Sensor index, and let \(u_{t}^A \in U^A\), \(u_{t}^s \in U^S\) denote the control actions of the Actor and each Sensor at time \(t\) selected from a finite set of control actions \(U^A\) and \(U^S\), respectively. 

\subsection{Problem Statement}
Given the uncertainty of cell values caused by compression and the initial positions of the Actor \(\textbf{p}_{0}^A\) and the Sensors \(\textbf{p}_{0}^s\), as well as the Actor's destination \(\textbf{p}_{G}^A\) and the set of abstractions \(\Theta\), we wish to design an exploration strategy for the Sensors to minimize compression uncertainty near the Actor's path. 
Each Sensor communicates to the Actor the triplet $(o_{t}^s, V_{t}^s, {\theta_{t}^s})$, as detailed in Section~\ref{sec:comm_of_abs}, along with its current position \(\textbf{p}_{t}^s\).
The Actor utilizes the variance of each Sensor $V_{t}^s$ to estimate the map compression uncertainty and selects each Sensor's action \(u_{t}^s\) to reduce uncertainty near its path.
Our primary goal is for the Actor to reach its destination in minimum time (shortest path) by using the accumulated measurements to compute its control \(u_{t}^A\).

%%%%%%%%%%%%%%%%%%%%%%%%%%%%%%%%%%%%%%%%%%%%%%%%%%%%%%%%%%%%%%%%%%%%%%%%%%%%%%%%
\section{Compression Uncertainty} \label{sec:variation}

In this section, we quantify the map uncertainty resulting from the unknown environment, and the history of transmitted abstractions $\theta_{0:t}$ until time $t$.
The quantified result is represented by a vector $h_t \in \Re^N$, which approximates the uncertainty of each cell in $\mathcal{M}$. 
The Actor employs $h_t$ to compute the trajectories/actions of the Sensors, as elaborated later in Section~\ref{MDPSolver}. 

First, we rewrite \eqref{eq:variance} as follows
\begin{equation}\label{eq:variance_simplified}
    x^\top [\diag{\A^{\theta}_i}] x = o_i^2 + V_i,
\end{equation}
where $[\diag{\A^{\theta}_i}] \in [0,1]^{N \times N}$ is a diagonal matrix with the $i$-th row of $\A^{\theta}$ as its diagonal.

Equation \eqref{eq:variance_simplified} along with $o = \A^{\theta} x$ describes a set in $\Re^N$ that serves as the estimation domain imposed by abstraction $\theta$.
To make the problem computationally tractable, we relax \eqref{eq:variance_simplified} by replacing the equality with an inequality.
Hence, a valid estimate $\widehat{x}$ of $x \in [0,1]^N$ belongs in the set
\begin{equation}\label{eq:constraints_abstraction}
    C^\theta = \left\{
    x \in \Re^N:  
    \begin{matrix}
        & x^\top [\diag{\A^{\theta}_i}] x \leq o_i^2 + V_i, \hspace{0.5em} &i = 1,2,\ldots, k^\theta \\
        &\A^{\theta} x = o  &\\
        &0 \leq x_j \leq 1, \hspace{0.5em} &j = 1,\ldots,N
    \end{matrix}
    \right\}.
\end{equation}
Note that relaxing \eqref{eq:variance_simplified} makes $C^\theta$ convex, which significantly improves the computational efficiency in the subsequent analysis (e.g., see Section~\ref{sec:decoder}).

Each robot engages in the estimation process individually. 
Each Sensor uses only its own abstractions to estimate $x$, while the Actor utilizes its own measurements and the communicated abstractions from all the Sensors.
Considering the history of abstractions $\theta_{0:t}^s$ of Sensor $s$ up to time $t$, the Sensor's measurements are described by the set $C_{t}^s = \cap_{\tau=0}^{t} C^{\theta_{\tau}^s}$.
The Actor's measurement of the $j$-th cell in $\mathcal{M}$ can be described by $\mathcal{\alpha}_j x = o_j$, where $\mathcal{\alpha}_j \in \Re^{1 \times N}$ has zeros everywhere except at the $j$-th element.
Let $C_{t}^{A'}$ be the set containing all of the Actor's measurements up to time $t$. 
The Actor can then use these accumulated measurements to form a smaller set, defined as $C_{t}^A = \left( \cap_{s \in S} C_{t}^s \right) \cap C_{t}^{A'}$, which it utilizes for its estimation process.

We define the compression uncertainty of the $j$-th cell in $\mathcal{M}$ as
\begin{align} \label{eq:uncertainty}
    \mathcal{H}_j = \max_{x, \widehat{x} \in C_{t}^A} \left|x_j - \widehat{x}_j\right|.
\end{align}  
However, as the $j$-th cell may be included in multiple compressed cells, computing $\mathcal{H}_j$ requires considering all the compressed cells within which it is contained.
To make the computation of $\mathcal{H}_j$ tractable, we derive an upper bound.

\begin{proposition}
    Let $x_j$ be the true value of the $j \text{-th}$ cell and $\widehat{x}_j$ be a valid estimate of it. 
    Let $I_j = \{i: i \in I^{\theta_{0:t}} \text{ and }  j \in i\}$. 
    %be the set of compressed cells of all the abstractions that cell $j$ is a part of.
    Then, 
    \begin{subequations}\label{eq:bounds}
        \begin{eqnarray}
             & \mathcal{H}_j \leq h_j, \label{eq:bounds_a}\\
             & h_j =  \min\limits_{i \in I_j} \!\left\{\nu_i + o_i\right\} + \min\limits_{i \in I_j}\! \left\{ \nu_i - o_i\right\}, \label{eq:bounds_b}
        \end{eqnarray}
    \end{subequations}
    where $\nu_i = \sqrt{{N^\theta_i}V_i}$.
\end{proposition}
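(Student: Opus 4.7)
The plan is to obtain a simple coordinate-wise ``confidence interval'' $[o_i - \nu_i,\, o_i + \nu_i]$ for $x_j$ from every compressed cell $i \in I_j$ that contains the finest-resolution cell $j$, and then combine these intervals across $I_j$ to bound the diameter of the feasible set $C_t^A$ projected onto the $j$-th coordinate.

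First, I fix an arbitrary $i \in I_j$ with associated abstraction $\theta$ and unpack the quadratic constraint from \eqref{eq:constraints_abstraction}. Since $[\diag \A^{\theta}_i]$ has entries $1/N^\theta_i$ on the diagonal positions of cells $j' \in i$ and zeros elsewhere, multiplying $x^\top[\diag \A^{\theta}_i]x \le o_i^2 + V_i$ by $N^\theta_i$ yields $\sum_{j' \in i} x_{j'}^2 \le N^\theta_i (o_i^2 + V_i)$. Combined with the equality $\A^\theta_i x = o_i$, which rewrites as $\sum_{j' \in i} x_{j'} = N^\theta_i o_i$, I can expand
\begin{equation*}
\sum_{j' \in i} (x_{j'} - o_i)^2 \;=\; \sum_{j' \in i} x_{j'}^2 \,-\, 2 o_i \sum_{j' \in i} x_{j'} \,+\, N^\theta_i o_i^2 \;=\; \sum_{j' \in i} x_{j'}^2 \,-\, N^\theta_i o_i^2 \;\le\; N^\theta_i V_i.
\end{equation*}
Because every term in the left-hand sum is non-negative and $j$ is one of the indices $j' \in i$, the single term $(x_j - o_i)^2$ is dominated by the whole sum, so $(x_j - o_i)^2 \le N^\theta_i V_i$, i.e.\ $|x_j - o_i| \le \nu_i$.

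Next, I apply this inequality for every $i \in I_j$ and to every feasible point in $C_t^A$. For any $x, \widehat{x} \in C_t^A$ the bounds $x_j \le \min_{i \in I_j}(o_i + \nu_i)$ and $\widehat{x}_j \ge \max_{i \in I_j}(o_i - \nu_i)$ hold simultaneously, so
\begin{equation*}
x_j - \widehat{x}_j \;\le\; \min_{i \in I_j}(o_i + \nu_i) \,-\, \max_{i \in I_j}(o_i - \nu_i) \;=\; \min_{i \in I_j}(\nu_i + o_i) \,+\, \min_{i \in I_j}(\nu_i - o_i) \;=\; h_j.
\end{equation*}
Swapping the roles of $x$ and $\widehat{x}$ gives the same bound for $\widehat{x}_j - x_j$, hence $|x_j - \widehat{x}_j| \le h_j$. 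Taking the maximum over $x, \widehat{x} \in C_t^A$ in \eqref{eq:uncertainty} yields $\mathcal{H}_j \le h_j$, which proves \eqref{eq:bounds_a}.

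The argument is essentially elementary; there is no genuinely hard step, but the two pieces of bookkeeping to be careful about are (i) translating the compact matrix form in \eqref{eq:constraints_abstraction} into the explicit sums over $j' \in i$ using $\A^\theta_{ij} = 1/N^\theta_i$, and (ii) correctly converting $\max_{i}(o_i - \nu_i) = -\min_i(\nu_i - o_i)$ when combining the per-cell intervals into the symmetric form of $h_j$. Any additional direct-measurement constraints in $C_t^{A'}$ only shrink the feasible set and therefore cannot violate the bound, so they need not enter the argument.
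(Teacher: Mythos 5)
Your proof is correct and follows essentially the same route as the paper: extract the per-cell interval $|x_j - o_i| \le \nu_i$ for every $i \in I_j$, intersect these intervals, apply the same bounds to both $x_j$ and $\widehat{x}_j$, and combine. The only (minor, and arguably favorable) difference is that you derive the interval from the relaxed quadratic constraints defining $C_t^A$ rather than directly from the exact variance identity \eqref{eq:variance}, which more cleanly justifies why the estimate $\widehat{x}$ — and not just the true $x$ — must satisfy the bound.
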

%\vspace{1 em}
\begin{proof}
    From Equation \eqref{eq:variance}
     \begin{align} \label{eq:variance_ineq}
        \left| x_j - o_i \right| \le \nu_i, \qquad \forall i \in I_j.
    \end{align}   
    It follows that 
    \begin{equation}\label{eq:x_bounds}
        \max\limits_{i \in I_j} \!\left\{o_i - \nu_i\right\} \le x_j \le \min\limits_{i \in I_j} \!\left\{o_i + \nu_i \right\}.
    \end{equation}
    A valid estimate $\widehat{x}_j$ of $x_j$ must also satisfy
    \begin{equation}\label{eq:hatx_bounds}
        \max\limits_{i \in I_j} \!\left\{o_i - \nu_i\right\} \le \widehat{x}_j \le \min\limits_{i \in I_j} \!\left\{o_i + \nu_i \right\},
    \end{equation}
    since the true cell value lies in this range.
    Combining \eqref{eq:x_bounds} and \eqref{eq:hatx_bounds}, we obtain the desired result.
%    
%     \begin{align*}
%          \mathcal{H}_j \leq h_j, 
%     \end{align*}
% where,
%  \begin{align*}
%               h_j = \min\limits_{i \in I} \!\left\{o_i + \nu_i \!\right\} - \max\limits_{i \in I_j}\! \left\{ o_i - \nu_i\right\}, 
%         \end{align*}
%     which is equivalent to \eqref{eq:bounds}.
\end{proof}

%%%%%%%%%%%%%%%%%%%%%%%%%%%%%%%%%%%%%%%%%%%%%%%%%%%%%%%%%%%%%%%%%%%%%%%%%%%%%%%%
\section{Framework Architecture} \label{sec:framework_arch}

In this section, we introduce the proposed framework, depicted in Fig.~\ref{fig:flowchart}. 
The framework leverages the same Decoder and Path Planner as in \cite{psomiadis2023}.

%%%%%%%%%%%%%%%%%%%%%%%%%%%
% Framework
\begin{figure}[tb]
    \centering        
    {\includegraphics[width=0.9\linewidth]{./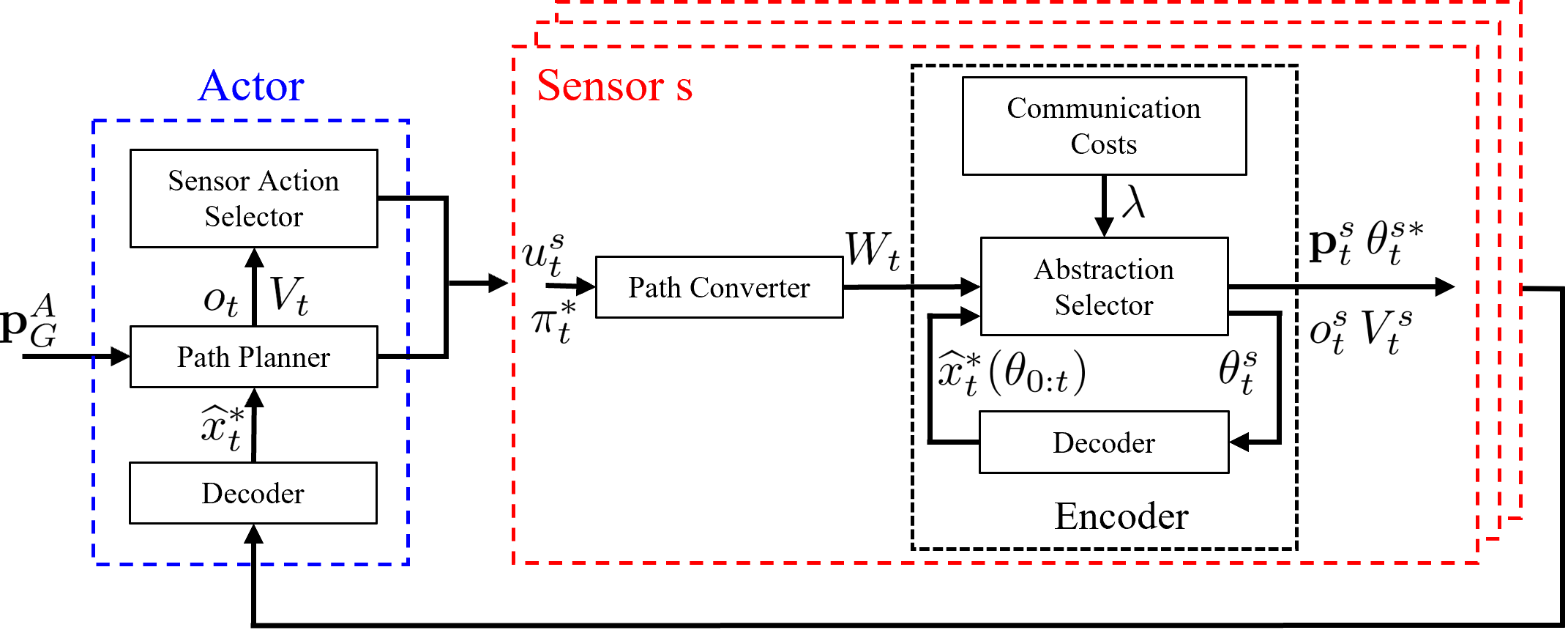}}
    \caption{Flowchart of the proposed framework's architecture at timestep \(t\). The Actor shares its current optimal path \(\pi^*_t\) to the Sensors along with their actions \(u^s_{t}\). The Sensors utilize \(\pi^*_t\) to select the optimal abstraction of their respective local maps \(\theta^{s*}_{t} \in \Theta\) and send it to the Actor to aid in reaching its target.}
    \label{fig:flowchart}
\end{figure}
%%%%%%%%%%%%%%%%%%%%%%%%%%%

\subsection{Decoder} \label{sec:decoder}

The decoder (see Fig.~\ref{fig:flowchart}) generates estimates for \(x \in [0,1]^N\), representing the cell values, by merging the accumulated measurements. 
Each robot utilizes its own constraint set $C_t$ to compute its estimates ($C_{t}^A$ for Actor and $C_{t}^s$ for Sensor $s$). 
Hence, the estimation vector is given by solving the convex optimization problem
\begin{align}\label{eq:decod_paper}
     \widehat{x}^*_t = \argmin_{  \widehat{x}_t \in C_t} \| \widehat{x}_t -  \mbox{$\frac{1}{2}$} \mathbf{1}\|^2,
\end{align}
where we assume that $x$ follows an unknown distribution such that, for all $j \in M$, $\mathbb{E}[x_j] = 0.5$.
The intuition for this choice and the proof can be found in~\cite{psomiadis2023}.

\subsection{Path Planner} \label{sec:path_planner}
Consider the graph $ \mathcal{G} = (\mathcal{N}$, $\mathcal{E}$) associated with the discretized map $\mathcal{M}$, where $\mathcal{N}$ represents the set of vertices and $\mathcal{E}$ the set of edges.
With a slight abuse of notation, we denote both the cell positions in $\mathcal{M}$ and the graph vertices $\mathcal{N}$ as \(\textbf{p}\), since they coincide.
An edge connects two vertices if the Actor can move between them using a control action \(u_{A} \in U^A\).
We assume that the traversal time of a cell is proportional to its value (difficulty of traversal) plus a constant (penalty for movement).
The cost of traversing a vertex is given by \cite{psomiadis2023, larsson2021}
\begin{equation}\label{eq:cell_cost}
    c_{\epsilon}  (\textbf{p})=     
    \begin{cases}
      \widehat{x}_{\textbf{p}} + a, & \text{if $\textbf{p} \in P_{\epsilon}$},\\
      N(\epsilon + a), & \text{otherwise},
    \end{cases}
\end{equation}
where \(\widehat{x}_{\textbf{p}} \in [0,1]\) is the estimated value of the cell at position \(\textbf{p}\), \(a\) is a constant penalty for movement, and \(P_{\epsilon} = \{\textbf{p} \in \mathcal{N}: x_{\textbf{p}} \leq \epsilon\}\) is the set of cells meeting a feasibility condition, with \(\epsilon \in [0,1]\) being the threshold for considering a cell to be an obstacle.

Utilizing \eqref{eq:cell_cost}, the Actor's optimal path is given by
\begin{equation}\label{eq:path}
    \pi^* = \argmin_{\pi \in \Pi}\sum_{\textbf{p} \in \pi}{c_{\epsilon} (\textbf{p})},
\end{equation}
where \(\Pi\) is the set of paths with the first element being the Actor's position at timestep \(t\), denoted by \(\textbf{p}_{t}^A\), and the last element being its goal location \(\textbf{p}_{G}^A\). 

Fig.~\ref{fig:example}(\subref{fig:ex_environ}) presents an occupancy grid with obstacles, while Fig.~\ref{fig:example}(\subref{fig:ex_graph}) illustrates the associated graph with the Actor's optimal path (computed, for instance, using Dijkstra's algorithm \cite{Dijkstra1959}).

%%%%%%%%%%%%%%%%%%%%%%%%%%%
% Example
\begin{figure}[tb]
     \centering
     \begin{subfigure}[b]{0.25\linewidth}
         \centering
         \includegraphics[width=\textwidth]{./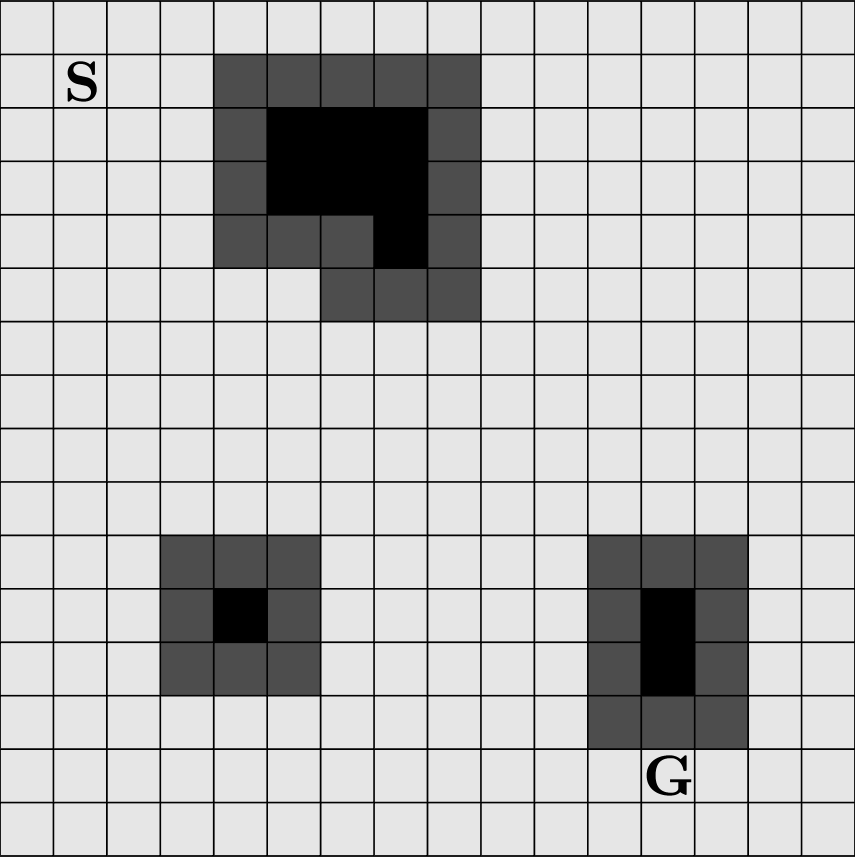}
         \caption{}
         \label{fig:ex_environ}
     \end{subfigure}
     \begin{subfigure}[b]{0.25\linewidth}
         \centering
         \includegraphics[width=\textwidth]{./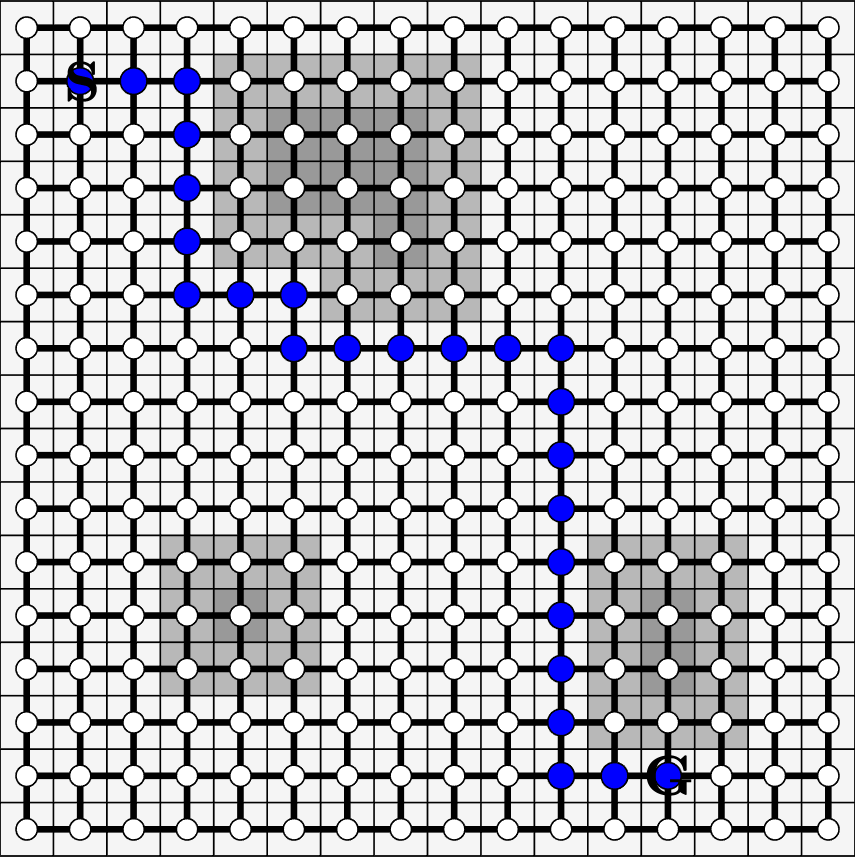}
         \caption{}
         \label{fig:ex_graph}
     \end{subfigure}
     \begin{subfigure}[b]{0.32\linewidth}
         \centering
         \includegraphics[width=\textwidth]{./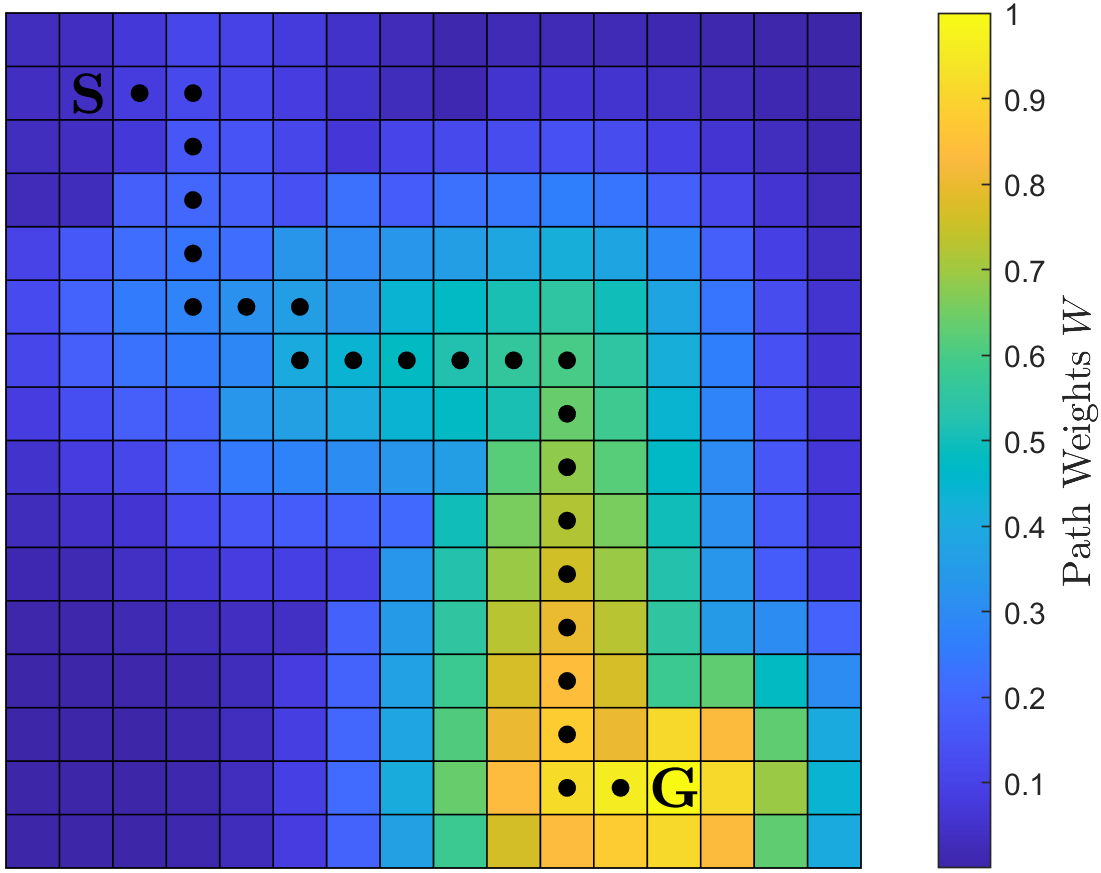}
         \caption{}
         \label{fig:ex_weights}
     \end{subfigure}
        \caption{(a) Example of a discretized environment. \(\textbf{S}\) denotes the starting cell while \(\textbf{G}\) denotes the target; (b) Associated graph of the environment employing the set of actions \(U = \) \{UP, DOWN, LEFT, RIGHT\}, along with the optimal path (blue nodes) with cost function \eqref{eq:cell_cost}; (c) Path weights computed using \eqref{eq:path_weights}.}
        \label{fig:example}
\end{figure}
%%%%%%%%%%%%%%%%%%%%%%%%%%%

\subsection{Sensor Action Selector} \label{MDPSolver}

The actions of the Sensors are selected to minimize the Actor's compression uncertainty. 
To accomplish this, the Sensors' actions are selected based on a task-driven metric for uncertainty, denoted by $R_t \in \Re^N$. 
This metric relies on the Actor's current path $\pi^*_t$ and the uncertainty of the whole map. 
In this study, the Sensors' actions are selected by the Actor in a centralized manner (see Fig.~\ref{fig:flowchart}). 
However, in the case of a single Sensor, the Sensor can select its own actions, as there are no abstractions from other Sensors that cause compression uncertainty.

To compute $R_t$, first we construct an area of interest around $\pi^*_t$ by assigning weights to each cell within \(M\). 
Given that the Actor will likely explore the vicinity of its current position in subsequent timesteps, and that its path may vary in the future, it is more advantageous for the Sensors to prioritize exploration in the vicinity of the Actor's target.
Hence, the weight of each cell is given by
\begin{subequations}\label{eq:path_weights}
    \begin{eqnarray}
         &     w_t(\textbf{p}) =  \frac{|\pi_{d}|}{|\pi^*_t|}e^{-\frac{d^{2}}{2v}}, \label{eq:path_weightsa}\\
         & d = \min\limits_{\textbf{p}_{\pi} \in \pi^*_t} \|\textbf{p}-\textbf{p}_{\pi}\|, \label{eq:path_weightsb}
    \end{eqnarray}
\end{subequations}
where $\textbf{p} \in \mathcal{M}$, \(v\) is a parameter characterizing the width of the region of interest around the path, $|\pi^*_t|$ is the total number of nodes (length) in $\pi^*_t$, and $|\pi_{d}|$ is the length of the portion of $\pi^*_t$, starting from the current location \(\textbf{p}_{A,t}\) up to the argument of minimization of \eqref{eq:path_weightsb}. 
Fig. \ref{fig:example}(\subref{fig:ex_weights}) depicts these path weights.

Hence, we define the task-driven metric of uncertainty $R_t$ as follows
\begin{equation}\label{eq:reward}
    R_t = W_t \circ h_t,
\end{equation}
where \(W_t \in \Re^N\) contains the weights \(w_t(\textbf{p})\) of every cell computed by \eqref{eq:path_weights}, \(\circ\) is the Hadamard product, and $h_t \in \Re^N$ contains the uncertainty given by \eqref{eq:bounds}.

In this study, the action of each Sensor is computed by solving an infinite-horizon discounted Markov Decision Process (MDP) $\langle \mathcal L^s, U^s, \widetilde R_{t}^s, \gamma, p \rangle$ at each timestep $t$.
Here, $\mathcal L^s$ represents the set of states, where each state corresponds to the local map of Sensor $s$, denoted by $L^{s}$, and $U^s$ is the set of actions.
To provide a tractable and computationally efficient solution, we restrict the states to a neighborhood around the Sensor's current local map. 
Let $\ell$ be the radius of that neighborhood and let $N' \leq N$ be the number of states in $\mathcal L^s$.
The parameter \(\gamma \in (0,1)\) is the discount factor, and \(p : \mathcal L^s \times \mathcal L^s \times U^S \to \{0,1\}\) is the transition function with transition matrix \(P^s \in 
\Re^{N' \times N'}\). 

Each element of the reward vector $\widetilde R_{t}^s \in \Re^{N'}$ is defined as the sum of rewards obtained at each position $\textbf{p}$ within the Sensor state $L^{s}$
\begin{equation}\label{eq:reward_MDP}
    \widetilde R_{t}^s (L^{s}) = \sum\limits_{\textbf{p} \in L^{s}} R_t(\textbf{p}).
\end{equation}

The value function is defined as the vector
\begin{equation}\label{eq:value_function}
	\mathcal{V}_{t}^s(\mu^s) = \sum_{\tau=0}^{\infty} \gamma^{\tau} (P^s(\mu^s))^\tau \widetilde R_{t}^s,
\end{equation}
where $\mu^s$ denotes the policy that selects the action at each state $L^{s}$. 
We obtain the optimal policy at each state $L^{s}$ by $\mu^{s*} (L^{s}) = \argmax_{\mu^s}\mathcal{V}_{t}^s(\mu^s)|_{L^{s}}$.
The Sensor's action at each timestep $t$ is therefore given by
\begin{equation}
	u_{t}^s = \mu^{s*}(L_{t}^s),
\end{equation}
where $L_{t}^s$ is the local map of Sensor $s$ when at position $\textbf{p}_{t}^s$.

\begin{algorithm}[tb]
\caption{The Sensor Action Selector's Algorithm}\label{alg:sensor_action_selector}
\hspace*{\algorithmicindent} \textbf{Input: }{$\pi^*_t$, $\A_{0:t}$, $o_{0:t}$, $V_{0:t}$, $\textbf{p}^{S}$, $\ell$, $U^S$, $\gamma$, $P^s$} \\
\hspace*{\algorithmicindent} \textbf{Output: } {$u_{t}^S$}
\begin{algorithmic}[1]

    \State $h_t \leftarrow$ \textsc{Uncertainty}$(\A_{0:t},o_{0:t},V_{0:t})$ using (\ref{eq:bounds})
    \State $W_t \leftarrow$ \textsc{Weights}$(\pi^*_t)$ using (\ref{eq:path_weights})
    \State $R_t \leftarrow$ \textsc{TaskDrivenUncertainty}$(h_t, W_t)$ using (\ref{eq:reward})
    \State $S_{prev} \leftarrow \emptyset$ \Comment{Initialize Sensor counting}

    \ForAll {$s \in S$}
        \State $\mathcal L^s \leftarrow \textsc{States}(\textbf{p}^{s}, \ell)$
        \State $\widetilde R_{t}^s(L^s) \leftarrow \textsc{Reward}( L^s, R_t) \quad \forall L^s \in \mathcal L^s$ using (\ref{eq:reward_MDP})
        \State $\widetilde R_{t}^s \leftarrow \textsc{Overlapping}(\widetilde R_{t}^s, \mathcal L^{s \in S_{prev}})$
        \State $u_{t}^s \leftarrow \textsc{ValueIteration}(\mathcal L^s, U^S, \widetilde R_{t}^s, \gamma, P^s)$
         \State $S_{prev} \leftarrow S_{prev} \cup \{s\}$
    \EndFor

    \State \Return $u_{t}^S$

\end{algorithmic}
\end{algorithm}

The algorithm for selecting the Sensors' actions is outlined in Algorithm \ref{alg:sensor_action_selector}.
Lines 5-11 iteratively compute the action for each Sensor by solving the individual MDPs.
Note that Line 8 sets the rewards to zero for the states already considered by the Sensors whose actions were determined in previous iterations, thus preventing overlap.
Line 9 performs Value Iteration to solve the MDP, which is the most computationally intensive step in the algorithm.

An alternative approach to obtain the Sensors' actions is to select those with the highest immediate reward, thereby avoiding the need to solve an MDP. 
We call this method the greedy algorithm and we present its results, along with the proposed method, in Section \ref{Experiments}.

\subsection{Path Converter}
The Actor sends its current optimal path $\pi^*_t$ obtained from \eqref{eq:path} to the Sensors.
Then, each Sensor utilizes this information to guide its abstraction selection process separately. 
This is achieved by constructing $W_t$ using \eqref{eq:path_weights}. 

\subsection{Encoder}
The encoder of the Sensor selects the optimal abstraction from the given set $\Theta$, to transmit to the Actor. 
Let \(\theta^s \in \Theta\) denote Sensor's $s$ abstraction. 
For the rest of this section, we will omit the superscript $s$ for simplicity.
Given the past selection $\theta_{0:t-1}$, the optimal abstraction at \(t\) is derived through the minimization of the following criterion by performing an exhaustive search
\begin{subequations}\label{eq:encoder}
    \begin{eqnarray}
         & J_t(\theta_{t}) = \beta D_t + (1-\beta)H_t + \lambda(\theta_{t}), \label{eq:encoder_a}\\
         & D_t = \| W_t \circ (\widetilde{x}_{t}-\widehat{x}_t^{*}(\theta_{0:t}))\|^2, \quad 
         H_t = \| W_t \circ h_t(\theta_{0:t})\|^2, \quad 
         \label{eq:encoder_b} \\
         & \theta^*_{t} = \argmin\limits_{\theta_{t} \in \Theta} J_t(\theta_{t}), \label{eq:encoder_d}
    \end{eqnarray}
\end{subequations}
where $\beta \in [0,1]$ is a trade-off factor, \(\widetilde{x}_t \in \Re^N\) is the sensed cell values by each Sensor until time \(t\), \(\widehat{x}_t(\theta_{0:t}) \in \Re^N\) is the estimation vector computed from \eqref{eq:decod_paper}, $h_t(\theta_{0:t})$ is the uncertainty vector given by \eqref{eq:bounds}, and \(\lambda(\theta_{t})\) is the communication cost (related to $n^\theta$ in \eqref{eq:bit_abstr}). 
It is important for each Sensor to select the optimal abstraction for the Actor's specific decoder (see Fig. \ref{fig:flowchart}). 
Hence, the estimation vector \(\widehat{x}_t^{*}(\theta_{0:t})\) is computed using \eqref{eq:decod_paper} with the constraint set $C_{t}^s$.

%%%%%%%%%%%%%%%%%%%%%%%%%%%%%%%%%%%%%%%%%%%%%%%%%%%%%%%%%%%%%%%%%%%%%%%%%%%%%%%%
\section{Experiments}
\label{Experiments}

Two different scenarios were studied in a \(64 \times 64\) environment (see Fig. \ref{fig:simulations}(\subref{fig:real_map})). 
In the first scenario, we varied the radius of the MDP neighborhood $\ell$ (see Section \ref{MDPSolver}) by conducting simulations with three Sensors. We also present the results of the greedy algorithm.
In the second scenario, we compared our proposed algorithm to the algorithm introduced in \cite{psomiadis2023}.
This scenario involved a single Sensor.
In both scenarios, we performed 100 simulations with varied initial positions of the Sensors.
All the robots started moving simultaneously, traversing one cell per timestep. 
The Sensors have the ability to move over obstacles (e.g., surveillance drones).

The Actor's sensing region is a \(5 \times 5\) grid around its position, while the Sensors have a field of view of \(7 \times 7\). 
The Actor's initial position is \(\textbf{p}_{0}^A = (6,29)\) and its target position is \( \textbf{p}_{G}^A = (43,25)\). 
The Actor's Path Planner parameters in \eqref{eq:cell_cost} are \(a = 0.025\) and \(\epsilon = 0.501\), the parameter in \eqref{eq:path_weights} is \(v = 3.33\), and the discount factor in \eqref{eq:value_function} is $\gamma = 0.9$.
The Sensors' encoders utilize a finite set of 10 abstractions, as shown in Fig. \ref{fig:abstr_set}, where the parameters in \eqref{eq:encoder} are $\beta = 0.9$ and $\lambda (\theta) = 0.05 k^\theta$, where the cost of abstraction is proportional to the amount of bits required for transmitting the abstracted map.

\subsection{Performance Metrics}
\label{performance_metrics}

We assumed that the time needed to traverse a cell for the Actor is proportional to that cell's value (see Section \ref{sec:path_planner}). 
Let \(\pi_f\) include the cells that the Actor traversed to reach its goal, without excluding duplicate cells.
Then, the accumulated cost is $\mathcal{C} = \sum_{\textbf{p} \in \pi_f}{c_{\epsilon} (\textbf{p})}$  and the average cost ratio is
\begin{equation}\label{eq:time_ratio}
    r_{\textrm{cost}} = \frac{1}{n_{\textrm{sim}}}\sum_{i = 0}^{n_{\textrm{sim}}}\frac{\mathcal{C}(i)}{\mathcal{C}_{\textrm{max}}},
\end{equation}
where \(n_\textrm{sim} = 100\) is the total simulation number, \(\mathcal{C}(i)\) denotes the accumulated cost for simulation $i$, and $\mathcal{C}_{\textrm{max}}$ refers to the accumulated cost in an Uninformed framework, where the Actor reaches its target without any assistance from the Sensors.
We also compute the average ratio of the transmitted bits
\begin{equation}\label{eq:bit_ratio}
    r_{\textrm{bits}} = \frac{1}{n_{\textrm{sim}}} \sum_{i = 0}^{n_{\textrm{sim}}} \frac{\mathcal{B}(i)}{\mathcal{B}_{\textrm{max}}(i)},
\end{equation}
where $\mathcal{B} = \sum_{t = 0}^{T}{n_{t}}$ and \(n_{t}\) is given in (\ref{eq:bit_abstr}) and denotes the number of bits sent by each Sensor at \(t\). 
If the variance is not transmitted, $n_m$ is halved. 
We set \(n_m = 24\), and \(n_a = 4\).
The number of the Sensors' steps $T$ is fixed at 60 for all simulations. 
$\mathcal B_{\textrm{max}}$ refers to the approach that results in the maximum value of the sum $\sum_{i = 0}^{n_{\textrm{sim}}} {\mathcal{B}(i)}$ for each scenario.

\subsection{Multiple Sensors}

%%%%%%%%%%%%%%%%%%%%%%%%%%%
% Scenario1
\begin{figure}[tb]
     \centering
     \begin{subfigure}[b]{0.29\linewidth}
         \centering
         \includegraphics[width=\textwidth]{./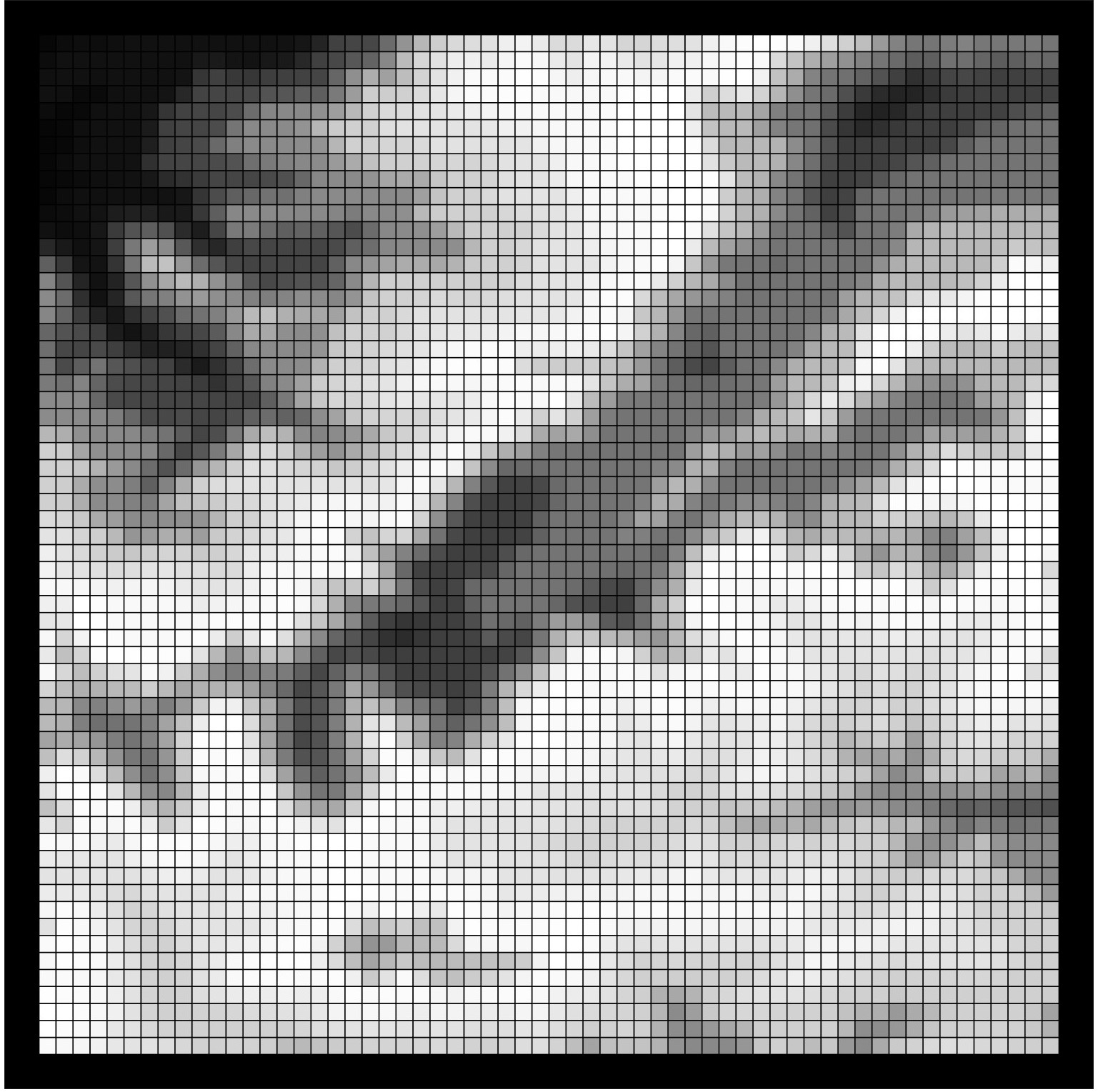}
         \caption{}
         \label{fig:real_map}
     \end{subfigure}
     \begin{subfigure}[b]{0.29\linewidth}
         \centering
         \includegraphics[width=\textwidth]{./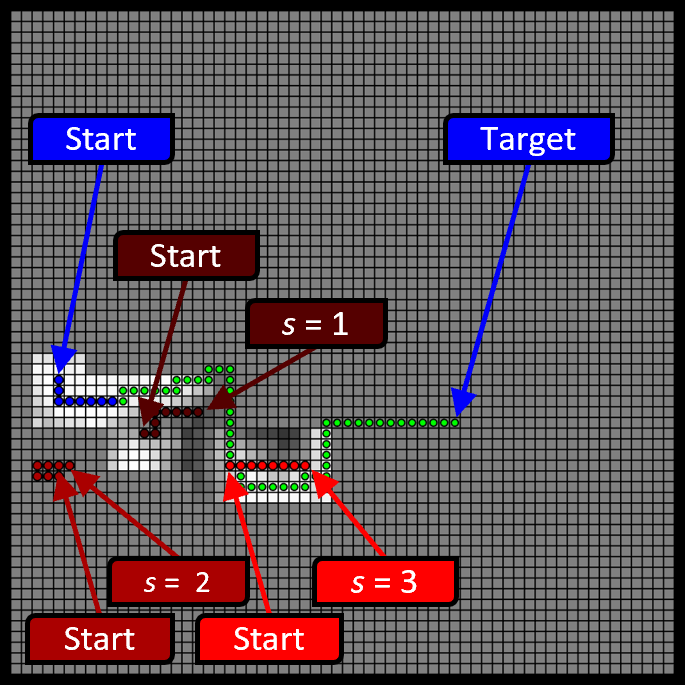}
         \caption{}
         \label{fig:scenario2t8}
     \end{subfigure}
     \begin{subfigure}[b]{0.29\linewidth}
         \centering
         \includegraphics[width=\textwidth]{./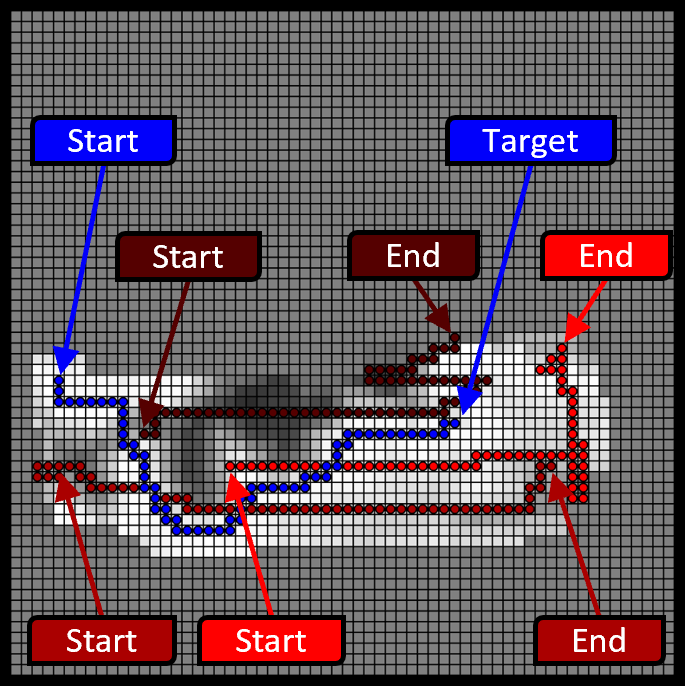}
         \caption{}
         \label{fig:scenario2}
     \end{subfigure}
        \caption{(a) Full Map; (b-c) Simulation snapshots of the proposed exploration framework at timestep $t = 8$ (b) and $t = 61$ (c) with $ l = 10$, $\textbf{p}^1_{0} = (14,24)$, $\textbf{p}^2_{0} = (6,21)$, $\textbf{p}^3_{0} = (22,21)$. The Actor's path until $t$ is presented in blue, its predicted path in green, while the Sensors' trajectories in red.}
        \label{fig:simulations}
\end{figure}
%%%%%%%%%%%%%%%%%%%%%%%%%%%

%%%%%%%%%%%%%%%%%%%%%%%%%%%
\begin{table}[!t]
\caption{Simulation results for Scenario 1}
\label{tab: results1}
\centering
\small
\begin{tabular}{ |c|c|c|c|c|c| } 
    \hline
     &  {Greedy} &  {$\ell = 1$} &  {$\ell = 2$} & {$\ell = 10$}  \\
    \hline
    \(r_\textrm{cost}\) & 0.31 & 0.30 & 0.30 & 0.31\\ 
    \hline
    \(r_\textrm{bits}\) & 0.98 & 0.98 & 1 & 0.98\\ 
    \hline
\end{tabular}
\end{table}
%%%%%%%%%%%%%%%%%%%%%%%%%%%

In the first scenario, we tested our exploration algorithm with a team of three Sensors and different values of the MDP neighborhood parameter $\ell$.
We also present the results of the greedy algorithm, where the Sensors' actions are selected based on the maximum immediate reward.
In each simulation, we varied the initial positions of the Sensors.

The resulting average cost and bit ratios are presented in Table \ref{tab: results1}. 
We observe that all the algorithms perform significantly better than the Uninformed framework, reducing the cost accumulated by the Actor in reaching its target by 70\%. 
The algorithm with $\ell = 1$ is the best approach for this scenario, achieving the greatest reduction in accumulated cost with the least amount of communication.

However, it is important to note that setting $\ell = 10$ increases the exploration coverage of the map, since $\ell$ determines the maximum distance between the Sensors (as indicated by Line 8 in Algorithm \ref{alg:sensor_action_selector}), which is particularly beneficial for larger maps and numerous Sensors.
This can be observed in Figs.~\ref{fig:simulations}(\subref{fig:scenario2t8})--(\subref{fig:scenario2}) where a simulation with $\ell = 10$ is shown. 
Note that Fig.~\ref{fig:simulations}(\subref{fig:scenario2t8}) is the timeframe before the Actor corrects its computed path.

\subsection{Single Sensor - Comparison with \cite{psomiadis2023}}

%%%%%%%%%%%%%%%%%%%%%%%%%%%
\begin{table}[!t]
\caption{Simulation results for Scenario 2}
\label{tab: results2}
\centering
\small
\begin{tabular}{ |c|c|c|c|c|c| } 
    \hline
    &  {Task-Driven Exploration} &  {Predefined path AS \cite{psomiadis2023}} &  {Predefined path FI \cite{psomiadis2023}} \\
    \hline
    \(r_\textrm{cost}\) & 0.72 & 0.87 & 0.67 \\ 
    \hline
    \(r_\textrm{bits}\) & 0.78 & 0.16 & 1 \\ 
    \hline
\end{tabular}
\end{table}
%%%%%%%%%%%%%%%%%%%%%%%%%%%

We tested our proposed exploration algorithm with a single Actor-Sensor pair and an MDP parameter $\ell = 1$, varying the Sensor's initial position.
We compared our algorithm with the predefined path Abstraction-Selection (AS) framework presented in \cite{psomiadis2023}.
We simulated two different predefined paths and took the average of them to compute the cost $\mathcal{C}$ and bits $\mathcal{B}$ of the simulations.
The paths followed a square trajectory (as in \cite{psomiadis2023}) with one path traversing the square clockwise and the other counterclockwise.
We also present the results of the Fully-Informed (FI) framework from \cite{psomiadis2023}, where the Sensor follows a predefined path and transmits all observed cells.
In the proposed framework, in order to ensure a fair comparison with the other framework and given that we are employing only one Sensor, we avoided the transmission of variance and relocated the Sensor Action Selector (see Fig. \ref{fig:flowchart}) from the Actor to the Sensor.

The resulting average cost ratio and bit ratio are presented in Table \ref{tab: results2}. 
The results indicate that the proposed framework performs similarly to the FI framework while significantly reducing communication. 
Additionally, we find that the predefined AS framework requires the least communication. 
This is because, in most simulations, the predefined path leads the Sensor to areas that are not of interest, whereas our proposed framework determines the Sensor's actions by directly considering the area of interest.

%%%%%%%%%%%%%%%%%%%%%%%%%%%%%%%%%%%%%%%%%%%%%%%%%%%%%%%%%%%%%%%%%%%%%%%%%%%%%%%%
\section{Conclusions}

In this paper, we studied the problem of exploring unknown environments with mobile sensors that communicate compressed measurements.
The sensors are deployed to facilitate mobile robot path-planning. 
We introduce a novel communication framework along with a tractable multi-agent exploration algorithm aimed at optimizing the actions of the sensors. 
Our algorithm leverages a task-driven measure of compression uncertainty as a reward function.
Simulation results demonstrate the effectiveness of our framework in reducing the time required for the robot to reach its target without excessively burdening the communication network.
Future extensions will be focused on refining the proposed framework by incorporating a more sophisticated abstraction selection search method to decrease the computational time of the Sensors, as well as exploring extensions to dynamic environments.

%
% ---- Bibliography ----
\bibliographystyle{sn-aps}
\bibliography{refs,Maity}

\end{document}